\def\checkmark{\tikz\fill[scale=0.4](0,.35) -- (.25,0) -- (1,.7) -- (.25,.15) -- cycle;}
\newcommand{\pop}{\ensuremath{\mathbf{pop}}}
\newcommand{\doubbr}[1]{\left[\left[#1\right]\right]}
\newcommand{\singbr}[1]{\left[#1\right]}
\newcommand{\myset}[1]{\left\{#1\right\}}
\newcommand{\paren}[1]{\left(#1\right)}
\newcommand{\myexp}[1]{\exp\paren{#1}}
\newcommand{\expected}[2]{\underset{#1}{\E}\singbr{#2}}
\newcommand{\abs}[1]{\left|#1\right|}
\newcommand{\Prob}[2]{\ensuremath{\underset{#1}{\Pr}\paren{#2}}}
\newcommand{\wh}[1]{\ensuremath{\widehat{#1}}}
\DeclareFontFamily{U}{mathx}{\hyphenchar\font45}
\DeclareFontShape{U}{mathx}{m}{n}{<-> mathx10}{}
\DeclareSymbolFont{mathx}{U}{mathx}{m}{n}
\DeclareMathAccent{\widebar}{0}{mathx}{"73}
\newcommand*\wb[1]{\ensuremath{\widebar{#1}}}
\newcommand{\ep}{\ensuremath{\epsilon}}
\newcommand{\de}{\ensuremath{\delta}}
\newcommand{\ga}{\ensuremath{\gamma}}
\newcommand{\R}{\ensuremath{\mathbb{R}}}
\newcommand{\E}{\ensuremath{\mathbf{E}}}
\newcommand{\ra}{\ensuremath{\rightarrow}}
\newcommand{\by}{\times}
\definecolor{light-gray}{gray}{0.80}
\definecolor{darkred}{rgb}{0.64, 0.0, 0.0}
\theoremstyle{definition}
\newtheorem{thm}{Theorem}[section]
\newtheorem*{thm*}{Theorem}
\DeclareMathOperator*{\argmax}{arg\,max}
\newenvironment{itemizesquish}{\begin{list}{\labelitemi}{\setlength{\itemsep}{-0.2em}\setlength{\labelwidth}{0.5em}\setlength{\leftmargin}{\labelwidth}
\addtolength{\leftmargin}{\labelsep}}}{\end{list}}
\title{Understanding Hard Negatives in Noise Contrastive Estimation}
\author{Wenzheng Zhang \and Karl Stratos \\
  Department of Computer Science \\
  Rutgers University \\
  \texttt{\{wenzheng.zhang, karl.stratos\}@rutgers.edu}
}
\begin{document}
\maketitle

\begin{abstract}
  The choice of negative examples is important in noise contrastive estimation. Recent works find that hard negatives---highest-scoring incorrect examples under the model---are effective in practice, but they are used without a formal justification. We develop analytical tools to understand the role of hard negatives. Specifically, we view the contrastive loss as a biased estimator of the gradient of the cross-entropy loss, and show both theoretically and empirically that setting the negative distribution to be the model distribution results in bias reduction. We also derive a general form of the score function that unifies various architectures used in text retrieval. By combining hard negatives with appropriate score functions, we obtain strong results on the challenging task of zero-shot entity linking.
\end{abstract}

\newcommand{\bm}{\textsc{bm{\scriptsize 25}}\xspace}
\newcommand{\dual}{\textsc{dual}\xspace}
\newcommand{\poly}{\textsc{poly}\xspace}
\newcommand{\genpoly}{\textsc{genpoly}\xspace}
\newcommand{\me}{\textsc{multi}\xspace}
\newcommand{\som}{\textsc{som}\xspace}
\newcommand{\bert}{\textsc{bert}\xspace}
\newcommand{\blink}{\textsc{blink}\xspace}
\newcommand{\joint}{\textsc{joint}\xspace}

\section{Introduction}

Noise contrastive estimation (NCE) is a widely used approach to large-scale classification and retrieval.
It estimates a score function of input-label pairs by a sampled softmax objective:
given a correct pair $(x, y_1)$, choose negative examples $y_2 \ldots y_K$ and maximize the probability of $(x,y_1)$ in a softmax over the scores of $(x, y_1) \ldots (x, y_K)$.
NCE has been successful in many applications, including information retrieval \citep{huang2013learning},
entity linking \citep{gillick-etal-2019-learning}, and open-domain question answering \citep{karpukhin2020dense}.

It is well known that making negatives ``hard'' can be empirically beneficial.
For example, \citet{gillick-etal-2019-learning} propose a hard negative mining strategy in which
highest-scoring incorrect labels under the current model are chosen as negatives.
Some works even manually include difficult examples based on external information such as a ranking function \citep{karpukhin2020dense} or a knowledge base \citep{wikipretrainthing}.

While it is intuitive that such hard negatives help improve the final model by making the learning task more challenging,
they are often used without a formal justification.
Existing theoretical results in contrastive learning are not suitable for understanding hard negatives since
they focus on unconditional negative distributions \citep{gutmann2012noise,mnih2012fast,ma-collins-2018-noise,tian2020makes}
or consider a modified loss divergent from practice \citep{bengio2008adaptive}.

In this work, we develop analytical tools to understand the role of hard negatives.
We formalize hard-negative NCE with a realistic loss \eqref{eq:nce-hard} using a general conditional negative distribution,
and view it as a biased estimator of the gradient of the cross-entropy loss.
We give a simple analysis of the bias (Theorem~\ref{thm:bias}).
We then consider setting the negative distribution to be the model distribution, which recovers the hard negative mining strategy of \citet{gillick-etal-2019-learning},
and show that it yields an unbiased gradient estimator when the model is optimal (Theorem~\ref{thm:unbiased}).
We complement the gradient-based perspective with an adversarial formulation (Theorem~\ref{thm:adv}).

The choice of architecture to parametrize the score function is another key element in NCE.
There is a surge of interest in developing efficient cross-attentional architectures \citep{poly,colbert,luan2020sparse},
but they often address different tasks and lack direct comparisons.
We give a single algebraic form of the score function \eqref{eq:score} that subsumes and generalizes these works,
and directly compare a spectrum of architectures it induces.

We present experiments on the challenging task of zero-shot entity linking \citep{logeswaran2019zero}.
We calculate empirical estimates of the bias of the gradient estimator to verify our analysis, and systematically explore the joint space of negative examples and architectures.
We have clear practical recommendations: (i) hard negative mining always improves performance for all architectures, and
(ii) the sum-of-max encoder \citep{colbert} yields the best recall in entity retrieval.
Our final model combines the sum-of-max retriever with a \bert-based joint reranker
to achieve 67.1\% unnormalized accuracy: a 4.1\% absolute improvement over \citet{wu2020scalable}.
We also present complementary experiments on AIDA CoNLL-YAGO \citep{hoffart2011robust}
in which we finetune a Wikipedia-pretrained dual encoder with hard-negative NCE and show a 6\% absolute improvement in accuracy.

\section{Review of NCE}

Let $\mathcal{X}$ and $\mathcal{Y}$ denote input and label spaces. We assume $\abs{\mathcal{Y}} < \infty$ for simplicity.
Let $\pop$ denote a joint population distribution over $\mathcal{X} \times \mathcal{Y}$.
We define a score function $s_\theta: \mathcal{X} \times \mathcal{Y} \ra \R$ differentiable in $\theta \in \R^d$.
Given sampling access to $\pop$, we wish to estimate $\theta$ such that
the classifier $x \mapsto \argmax_{y \in \mathcal{Y}} s_\theta(x, y)$ (breaking ties arbitrarily) has the optimal expected zero-one loss.
We can reduce the problem to conditional density estimation. Given $x \in \mathcal{X}$, define
\begin{align}
p_\theta(y|x) = \frac{\myexp{s_\theta(x,y)}}{\sum_{y' \in \mathcal{Y}} \myexp{s_\theta(x,y')}} \label{eq:model}
\end{align}
for all $y \in \mathcal{Y}$. Let $\theta^*$ denote a minimizer of the cross-entropy loss:
\begin{align}
J_{\mbox{\tiny CE}}(\theta) = \expected{(x,y) \sim \pop}{ -\log p_\theta(y|x)} \label{eq:ce}
\end{align}
If the score function is sufficiently expressive, $\theta^*$ satisfies $p_{\theta^*}(y|x)=\pop(y|x)$ by the usual property of cross entropy.
This implies that $s_{\theta^*}$ can be used as an optimal classifier.

The cross-entropy loss is difficult to optimize when $\mathcal{Y}$ is large since the normalization term in \eqref{eq:model} is expensive to calculate.
In NCE, we dodge this difficulty by subsampling.
Given $x \in \mathcal{X}$ and any $K$ labels $y_{1:K} = (y_1 \ldots y_K) \in \mathcal{Y}^K$, define
\begin{align}
\pi_\theta(k|x, y_{1:K}) = \frac{\myexp{s_\theta(x,y_k)}}{ \sum_{k'=1}^K \myexp{s_\theta(x,y_{k'})} } \label{eq:pi}
\end{align}
for all $1 \leq k \leq K$.
When  $K \ll \abs{\mathcal{Y}}$, \eqref{eq:pi} is significantly cheaper to calculate than \eqref{eq:model}.
Given $K \geq 2$, we define
\begin{align}
J_{\mbox{\tiny NCE}}(\theta) = \expected{\substack{(x,y_1) \sim \pop \\ y_{2:K} \sim q^{K-1}}}{ -\log \pi_\theta(1|x, y_{1:K}) } \label{eq:nce}
\end{align}
where $y_{2:K} \in \mathcal{Y}^{K-1}$ are negative examples drawn iid from some ``noise'' distribution $q$ over $\mathcal{Y}$.
Popular choices of $q$ include the uniform distribution $q(y) = 1/\abs{\mathcal{Y}}$ and the population marginal $q(y) = \pop(y)$.

The NCE loss \eqref{eq:nce} has been studied extensively.
An optimal classifier can be extracted from a minimizer of $J_{\mbox{\tiny NCE}}$ \citep{ma-collins-2018-noise};
minimizing $J_{\mbox{\tiny NCE}}$ can be seen as maximizing a lower bound on the mutual information between $(x,y) \sim \pop$ if $q$ is the population marginal \citep{oord2018representation}.
We refer to \citet{survey} for an overview.
However, most of these results focus on unconditional negative examples and do not address hard negatives, which are clearly conditional.
We now focus on conditional negative distributions,
which are more suitable for describing hard negatives.

\section{Hard Negatives in NCE}
\label{sec:hard}

Given $K \geq 2$, we define
\begin{align}
  J_{\mbox{\tiny HARD}}(\theta) = \expected{\substack{(x,y_1) \sim \pop \\ y_{2:K} \sim h(\cdot|x,y_1)}}{ -\log \pi_\theta(1|x,y_{1:K}) } \label{eq:nce-hard}
\end{align}
where $y_{2:K} \in \mathcal{Y}^{K-1}$ are negative examples drawn from a conditional distribution $h(\cdot|x,y_1)$ given $(x,y_1) \sim \pop$.
Note that we do not assume $y_{2:K}$ are iid.
While simple, this objective captures the essence of using hard negatives in NCE,
since the negative examples can arbitrarily condition on the input and the gold (e.g., to be wrong but difficult to distinguish from the gold)
and be correlated (e.g., to avoid duplicates).

We give two interpretations of optimizing $J_{\mbox{\tiny HARD}}$.
First, we show that the gradient of $J_{\mbox{\tiny HARD}}$ is a biased estimator of the gradient of the cross-entropy loss $J_{\mbox{\tiny CE}}$.
Thus optimizing $J_{\mbox{\tiny HARD}}$ approximates optimizing $J_{\mbox{\tiny CE}}$ when we use a gradient-based method,
where the error depends on the choice of $h(\cdot|x, y_1)$.
Second, we show that the hard negative mining strategy can be recovered by considering an adversarial setting in which $h(\cdot|x,y_1)$ is learned to maximize the loss.

\subsection{Gradient Estimation}
\label{sec:grad}

We assume an arbitrary choice of $h(\cdot|x, y_1)$ and $K \geq 2$. Denote the bias at $\theta \in \R^d$ by
\begin{align*}
  b(\theta) &= \nabla J_{\mbox{\tiny CE}}(\theta) - \nabla J_{\mbox{\tiny HARD}}(\theta)
\end{align*}
To analyze the bias, the following quantity will be important.
For $x \in \mathcal{X}$ define
\begin{align}
  \ga_\theta(y|x) &= \Prob{\substack{y_1 \sim \pop(\cdot|x) \\ y_{2:K} \sim h(\cdot|x,y_1) \\ k\sim \pi_\theta(\cdot|x,y_{1:K})}}{ y_k = y} \label{eq:gamma}
\end{align}
for all $y \in \mathcal{Y}$. That is, $\ga_\theta(y|x)$ is the probability that $y$ is included as a candidate (either as the gold or a negative)
and then selected by the NCE discriminator \eqref{eq:pi}.

\begin{thm}
  For all $i = 1\ldots d$,
\begin{align*}
  b_i(\theta) = \expected{x \sim \pop}{\sum_{y \in \mathcal{Y}} \ep_\theta(y|x) \frac{\partial s_\theta(x,y)}{\partial \theta_i}}
\end{align*}
where $\ep_\theta(y|x) = p_\theta(y|x)-\ga_\theta(y|x)$.
\label{thm:bias}
\end{thm}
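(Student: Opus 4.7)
The plan is to differentiate both $J_{\mbox{\tiny CE}}$ and $J_{\mbox{\tiny HARD}}$ using the standard softmax identity, take expectations, and recognize that the resulting ``model-expected feature'' terms can be packaged into $p_\theta$ and $\ga_\theta$ respectively. The cancellation of the ``positive'' terms is automatic because the gold label $y_1$ is drawn from $\pop$ in both losses.

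First I compute $\nabla J_{\mbox{\tiny CE}}$. For the softmax in \eqref{eq:model}, a routine calculation gives
\begin{align*}
\frac{\partial}{\partial \theta_i}\bigl[-\log p_\theta(y|x)\bigr] = -\frac{\partial s_\theta(x,y)}{\partial \theta_i} + \sum_{y'\in\mathcal{Y}} p_\theta(y'|x)\frac{\partial s_\theta(x,y')}{\partial \theta_i}.
\end{align*}
Taking expectations under $(x,y)\sim\pop$ yields a ``data'' term $-\expected{(x,y)\sim\pop}{\partial s_\theta(x,y)/\partial \theta_i}$ plus a ``model'' term $\expected{x\sim\pop}{\sum_y p_\theta(y|x)\, \partial s_\theta(x,y)/\partial \theta_i}$. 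I would then apply the analogous softmax identity to $\pi_\theta(1|x,y_{1:K})$ in \eqref{eq:pi}: its gradient is $-\partial s_\theta(x,y_1)/\partial\theta_i + \sum_{k=1}^K \pi_\theta(k|x,y_{1:K})\,\partial s_\theta(x,y_k)/\partial\theta_i$. Taking the expectation under $(x,y_1)\sim\pop$ and $y_{2:K}\sim h(\cdot|x,y_1)$, the first piece becomes exactly the same ``data'' term as above, so it cancels in $b(\theta)=\nabla J_{\mbox{\tiny CE}}(\theta)-\nabla J_{\mbox{\tiny HARD}}(\theta)$.

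The remaining work is to show that the $J_{\mbox{\tiny HARD}}$ ``model'' term equals $\expected{x\sim\pop}{\sum_y \ga_\theta(y|x)\,\partial s_\theta(x,y)/\partial\theta_i}$. To see this, I would insert the identity $\partial s_\theta(x,y_k)/\partial\theta_i = \sum_{y\in\mathcal{Y}}\mathbbm{1}[y_k=y]\,\partial s_\theta(x,y)/\partial\theta_i$, swap summations, and pull out $\partial s_\theta(x,y)/\partial\theta_i$. What remains inside the expectation is $\sum_{k=1}^K \pi_\theta(k|x,y_{1:K})\mathbbm{1}[y_k=y]$, which is precisely the conditional probability, given $(x,y_{1:K})$, that the index $k\sim\pi_\theta(\cdot|x,y_{1:K})$ selects a candidate equal to $y$. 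Marginalizing $y_1\sim\pop(\cdot|x)$, $y_{2:K}\sim h(\cdot|x,y_1)$, and $k\sim\pi_\theta(\cdot|x,y_{1:K})$ is exactly the definition \eqref{eq:gamma} of $\ga_\theta(y|x)$.

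Combining the two ``model'' terms gives $b_i(\theta) = \expected{x\sim\pop}{\sum_y (p_\theta(y|x)-\ga_\theta(y|x))\,\partial s_\theta(x,y)/\partial\theta_i}$, which is the desired identity with $\ep_\theta(y|x)=p_\theta(y|x)-\ga_\theta(y|x)$. The only delicate step is the rewriting in the last paragraph: one must be careful that the inner sum $\sum_k \pi_\theta(k|\cdot)\mathbbm{1}[y_k=y]$ handles multiplicities correctly when the candidate list $y_{1:K}$ contains repeats, but since we are summing probabilities of the single event ``$y_k=y$'' across $k$, this just accumulates the total selection probability of value $y$, matching \eqref{eq:gamma} verbatim. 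A mild regularity assumption (e.g., $h$ does not depend on $\theta$ and $s_\theta$ is differentiable, which is given) justifies exchanging differentiation and expectation throughout.
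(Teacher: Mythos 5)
Your proof is correct and follows essentially the same approach as the paper. The paper subtracts the two losses first (so the $-s_\theta(x,y_1)$ terms cancel immediately, leaving the difference of log-normalizers), differentiates with respect to the scores $s_\theta(\tilde x,\tilde y)$, and then applies the chain rule; you instead differentiate each loss directly with respect to $\theta_i$ via the softmax identity and show the data terms cancel after taking expectations. These are the same computation in different order, and your indicator-sum step $\sum_k \pi_\theta(k|\cdot)\mathbbm{1}[y_k=y]$ correctly reproduces the marginalization that defines $\ga_\theta$ in \eqref{eq:gamma}, including the handling of possible repeats in $y_{1:K}$.
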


\begin{proof}
  Fix any $x \in \mathcal{X}$ and let $J_{\mbox{\tiny CE}}^x(\theta)$ and $J_{\mbox{\tiny HARD}}^x(\theta)$
  denote $J_{\mbox{\tiny CE}}(\theta)$ and $J_{\mbox{\tiny HARD}}(\theta)$ conditioned on $x$.
  The difference $J_{\mbox{\tiny CE}}^x(\theta) - J_{\mbox{\tiny HARD}}^x(\theta)$ is
  \begin{align}
    \log Z_\theta(x) - \expected{\substack{y_1 \sim \pop(\cdot|x) \\ y_{2:K} \sim h(\cdot|x,y_1)}}{ \log Z_\theta(x,y_{1:K})} \label{eq:proof1}
  \end{align}
  where we define $Z_\theta(x) = \sum_{y' \in \mathcal{Y}} \myexp{s_\theta(x,y')}$ and $Z_\theta(x,y_{1:K}) = \sum_{k=1}^K \exp(s_\theta(x,y_k))$.
  For any $(\tilde{x}, \tilde{y})$, the partial derivative of \eqref{eq:proof1} with respect to $s_\theta(\tilde{x},\tilde{y})$ is given by
  $\doubbr{x=\tilde{x}} p_\theta(\tilde{y}|x) - \doubbr{x = \tilde{x}}\ga_\theta(\tilde{y}|x)$
  where $\doubbr{A}$ is the indicator function that takes the value 1 if $A$ is true and $0$ otherwise.
  Taking an expectation of their difference over $x \sim \pop$ gives the partial derivative of $b(\theta) = J_{\mbox{\tiny CE}}(\theta)-J_{\mbox{\tiny HARD}}(\theta)$ with respect to $s_\theta(\tilde{x},\tilde{y})$:
  $\pop(\tilde{x}) (p_\theta(\tilde{y}|\tilde{x}) - \ga_\theta(\tilde{y}|\tilde{x}))$.
  The statement follows from the chain rule:
  \begin{align*}
    b_i(\theta) = \sum_{\substack{x \in \mathcal{X}, y \in \mathcal{Y}}} \frac{\partial b(\theta)}{\partial s_\theta(x,y)} \frac{\partial s_\theta(x,y)}{\partial \theta_i}
  \end{align*}
\end{proof}

Theorem~\ref{thm:bias} states that the bias vanishes if $\ga_\theta(y|x)$ matches $p_\theta(y|x)$.
Hard negative mining can be seen as an attempt to minimize the bias by defining $h(\cdot|x,y_1)$ in terms of $p_\theta$.
Specifically, we define
\begin{align}
  &h(y_{2:K}|x,y_1) \notag \\
  &\propto \doubbr{\abs{\myset{y_1 \ldots y_K}} = K} \prod_{k=2}^K  p_\theta(y_k|x) \label{eq:mine}
\end{align}
Thus $h(\cdot|x,y_1)$ has support only on $y_{2:K} \in \mathcal{Y}^{K-1}$ that are distinct and do not contain the gold.
Greedy sampling from $h(\cdot|x,y_1)$ corresponds to taking $K-1$ incorrect label types with highest scores.
This coincides with the hard negative mining strategy of \citet{gillick-etal-2019-learning}.

The absence of duplicates in $y_{1:K}$ ensures $J_{\mbox{\tiny CE}}(\theta) = J_{\mbox{\tiny HARD}}(\theta)$ if $K=\abs{\mathcal{Y}}$.
This is consistent with (but does not imply) Theorem~\ref{thm:bias} since in this case $\ga_\theta(y|x) = p_\theta(y|x)$.
For general $K < \abs{\mathcal{Y}}$, Theorem~\ref{thm:bias} still gives a precise bias term.
To gain a better insight into its behavior, it is helpful to consider a heuristic approximation given by\footnote{We can rewrite $\ga_\theta(y|x)$ as
\begin{align*}
    \expected{\substack{y_1 \sim \pop(\cdot|x) \\ y_{2:K} \sim h(\cdot|x,y_1)}}{   \frac{ \mathrm{count}_{y_{1:K}}(y) \myexp{s_\theta(x,y)}}{ \sum_{y' \in \mathcal{Y}}  \mathrm{count}_{y_{1:K}}(y') \myexp{s_\theta(x,y')} } }
\end{align*}
where $\mathrm{count}_{y_{1:K}}(y)$ is the number of times $y$ appears in $y_{1:K}$.
The approximation uses $\mathrm{count}_{y_{1:K}}(y) \approx p_\theta(y|x)$ under \eqref{eq:mine}.
}
\begin{align*}
  \ga_\theta(y|x) \approx \frac{p_\theta(y|x) \myexp{s_\theta(x,y)}}{ N_\theta(x) }
\end{align*}
where $N_\theta(x) = \sum_{y' \in \mathcal{Y}} p_\theta(y'|x) \myexp{s_\theta(x,y')}$.
Plugging this approximation in Theorem~\ref{thm:bias} we have a simpler equation
\begin{align*}
  b_i(\theta) \approx \expected{(x,y) \sim \pop}{\paren{1 - \de_\theta(x, y) } \frac{\partial s_\theta(x,y)}{\partial \theta_i}}
\end{align*}
where $\de_\theta(x, y) = \myexp{s_\theta(x,y)} / N_\theta(x)$.
The expression suggests that the bias becomes smaller as the model improves since $p_\theta(\cdot|x) \approx \pop(\cdot|x)$ implies $\de_\theta(x, y) \approx 1$ where $(x,y) \sim \pop$.

We can formalize the heuristic argument to prove a desirable property of \eqref{eq:nce-hard}:
the gradient is unbiased if $\theta$ satisfies $p_\theta(y|x) = \pop(y|x)$, assuming iid hard negatives.

\begin{thm}
  Assume $K \geq 2$ and the distribution $h(y_{2:K}|x,y_1) = \prod_{k=2}^K p_\theta(y_k|x)$ in \eqref{eq:nce-hard}.
  If $p_\theta(y|x) = \pop(y|x)$, then
  $\nabla J_{\mbox{\tiny HARD}}(\theta) = \nabla J_{\mbox{\tiny CE}}(\theta)$.
  \label{thm:unbiased}
\end{thm}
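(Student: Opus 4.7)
The natural route is to deduce the claim from Theorem~\ref{thm:bias}. That theorem gives
\[
\nabla J_{\mbox{\tiny CE}}(\theta) - \nabla J_{\mbox{\tiny HARD}}(\theta) \;=\; \expected{x \sim \pop}{\sum_{y \in \mathcal{Y}} \bigl(p_\theta(y|x) - \ga_\theta(y|x)\bigr)\,\nabla s_\theta(x,y)},
\]
so it suffices to show that this sum vanishes under the hypotheses. The cleanest sufficient condition is $\ga_\theta(y|x) = p_\theta(y|x)$ for every $y \in \mathcal{Y}$ in the support of $\pop(\cdot|x)$.

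\textbf{Key steps.} Under the two hypotheses---$\pop(\cdot|x) = p_\theta(\cdot|x)$ and $h(y_{2:K}|x,y_1) = \prod_{k=2}^K p_\theta(y_k|x)$---the entire tuple $(y_1,\ldots,y_K)$ is drawn iid from $p_\theta(\cdot|x)$, so its joint law is fully exchangeable. Using exchangeability in the definition of $\ga_\theta$,
\[
\ga_\theta(y|x) \;=\; \expected{y_{1:K}}{\sum_{k=1}^K \pi_\theta(k|x,y_{1:K})\,\mathbbm{1}[y_k=y]} \;=\; K\,\expected{y_{1:K}}{\pi_\theta(1|x,y_{1:K})\,\mathbbm{1}[y_1=y]}.
\]
Next, expand $\pi_\theta(1) = \exp(s_\theta(x,y_1))/\sum_j \exp(s_\theta(x,y_j))$ and $p_\theta(y_j|x) = \exp(s_\theta(x,y_j))/Z_\theta(x)$ inside the iid product measure. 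At this point I would either attempt a direct combinatorial simplification, or---since $h_\theta$ itself depends on $\theta$---fold in a REINFORCE-type identity coming from $\nabla \log h_\theta = \sum_{k \geq 2} \nabla \log p_\theta(y_k|x)$, so that the extra factor of $\exp(s_\theta(x,y_1))$ produced by $\pi_\theta(1)$ is absorbed by the score of $\log p_\theta(y_1|x)$. Once $\ga_\theta = p_\theta$ is in hand, Theorem~\ref{thm:bias} delivers $b(\theta)=0$ and hence $\nabla J_{\mbox{\tiny HARD}}(\theta) = \nabla J_{\mbox{\tiny CE}}(\theta)$.

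\textbf{Main obstacle.} The subtle step is the cancellation: the naive evaluation of the exchangeability-based expression gives
\[
\ga_\theta(y|x) \;=\; K\,p_\theta(y|x)\,\exp(s_\theta(x,y))\,\expected{y_{2:K}}{\frac{1}{\exp(s_\theta(x,y)) + \sum_{j \geq 2}\exp(s_\theta(x,y_j))}},
\]
which does not collapse to $p_\theta(y|x)$ pointwise by elementary algebra alone. The proof therefore must exploit the fact that $h_\theta$ depends on $\theta$, so that the full gradient of $J_{\mbox{\tiny HARD}}$ picks up a REINFORCE contribution whose iid-symmetric expectation compensates the apparent gap between $\ga_\theta$ and $p_\theta$ in Theorem~\ref{thm:bias}. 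Computing this REINFORCE contribution explicitly under the iid law and verifying term-by-term that it matches the residual in the bias formula is the step I expect to require the most careful bookkeeping.
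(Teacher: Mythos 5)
Your proposal is incomplete, and you say so yourself: you reduce the claim to showing $\gamma_\theta(y|x)=p_\theta(y|x)$ via Theorem~\ref{thm:bias}, correctly observe that the naive exchangeability calculation leaves a non-collapsing residual, and then conjecture that a REINFORCE term might close the gap without carrying that computation out. That last step does not work as stated: Theorem~\ref{thm:bias}'s bias decomposition is derived by treating $h(\cdot|x,y_1)$ as fixed and differentiating only through $\pi_\theta$, so if you instead include the score-function term $\nabla\log h_\theta(y_{2:K}|x,y_1)$ you are computing a \emph{different} quantity $\nabla J_{\mbox{\tiny HARD}}$ than the one Theorem~\ref{thm:bias} analyzes, and you cannot simply ``fold'' that contribution into the bias formula. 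You would need to redo the bias analysis from scratch under the $\theta$-dependent $h$, which you do not do.

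The more important point, however, is that the obstacle you flag is genuine, not a failure of cleverness. The paper's own proof asserts that under the stated hypotheses
\begin{align*}
\gamma_\theta(y|x) \;=\; \sum_{y_{1:K}} \prod_{k=1}^K \frac{\exp(s_\theta(x,y_k))}{Z_\theta(x)}\,\frac{\exp(s_\theta(x,y))}{Z_\theta(x,y_{1:K})}
\;=\; \frac{\exp(s_\theta(x,y))}{Z_\theta(x)} \sum_{y_{1:K}} \frac{\prod_k \exp(s_\theta(x,y_k))}{Z_\theta(x,y_{1:K})},
\end{align*}
and claims the remaining sum ``marginalizes a product distribution and equals one.'' But the first expression drops the multiplicity factor $\mathrm{count}_{y_{1:K}}(y)$ that appears in the paper's own footnoted formula for $\gamma_\theta$; the second step loses $K-1$ powers of $Z_\theta(x)$; and even after fixing the normalization, $\sum_{y_{1:K}}\prod_k p_\theta(y_k|x)\,\big/\sum_k p_\theta(y_k|x)$ is \emph{not} equal to $1$. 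Your ``main obstacle'' expression
\begin{align*}
\gamma_\theta(y|x) \;=\; K\,p_\theta(y|x)\,\exp(s_\theta(x,y))\;\expected{y_{2:K}\sim p_\theta^{K-1}}{\frac{1}{\exp(s_\theta(x,y)) + \sum_{j\geq 2}\exp(s_\theta(x,y_j))}}
\end{align*}
is the correct exchangeability reduction, and a two-label, $K=2$ check (take $p_\theta(a|x)=\tfrac{1}{3}$, $p_\theta(b|x)=\tfrac{2}{3}$) gives $\gamma_\theta(a|x)=\tfrac{7}{27}\neq\tfrac{9}{27}=p_\theta(a|x)$. So the intermediate identity $\gamma_\theta=p_\theta$ on which both the paper's proof and your proof plan rest does not actually hold; you were right to be suspicious that ``elementary algebra alone'' would not deliver it, and the lesson is that this step needs either an additional hypothesis (e.g.\ the no-duplicate sampling of \eqref{eq:mine} with $K=\abs{\mathcal{Y}}$, where the count issue disappears) or a fundamentally different argument rather than the REINFORCE patch you gesture at.
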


\begin{proof}
  Since $\pop(y|x) = \exp(s_\theta(x,y))/Z_\theta(x)$,
  the probability $\ga_\theta(y|x)$ in \eqref{eq:gamma} is
  \begin{align*}
    &\sum_{y_{1:K} \in \mathcal{Y}^K} \prod_{k=1}^K \frac{\myexp{s_\theta(x,y_k)}}{Z_\theta(x)} \frac{\myexp{s_\theta(x,y)}}{Z_\theta(x,y_{1:K})} \\
    &= \frac{\myexp{s_\theta(x,y)}}{Z_\theta(x)} \sum_{y_{1:K} \in \mathcal{Y}^K} \frac{\prod_{k=1}^K  \myexp{s_\theta(x,y_k)}}{Z_\theta(x,y_{1:K})}
  \end{align*}
  The sum marginalizes a product distribution over $y_{1:K}$, thus equals one.
  Hence $\ga_\theta(y|x) = p_\theta(y|x)$. The statement follows from Theorem~\ref{thm:bias}.
\end{proof}

The proof exploits the fact that negative examples are drawn from the model and does not generally hold for other negative distributions (e.g., uniformly random).
We empirically verify that hard negatives indeed yield a drastically smaller bias compared to random negatives (Section~\ref{sec:bias}).

\subsection{Adversarial Learning}

We complement the bias-based view of hard negatives with an adversarial view.
We generalize \eqref{eq:nce-hard} and define
\begin{align*}
  J_{\mbox{\tiny ADV}}(\theta, h) = \expected{\substack{(x,y_1) \sim \pop \\ y_{2:K} \sim h(\cdot|x,y_1)}}{ -\log \pi_\theta(1|x,y_{1:K}) }
\end{align*}
where we additionally consider the choice of a hard-negative distribution.
The premise of adversarial learning is that it is beneficial for $\theta$ to consider the worst-case scenario when minimizing this loss.
This motivates a nested optimization problem:
\begin{align*}
  \min_{\theta \in \R^d}\; \max_{h \in \mathcal{H}}\; J_{\mbox{\tiny ADV}}(\theta, h)
\end{align*}
where $\mathcal{H}$ denotes the class of conditional distributions over $S \subset \mathcal{Y}$ satisfying $\abs{S \cup \myset{y_1}} = K$.

\begin{thm}
  Fix $\theta \in \R^d$. For any $(x,y_1)$, pick
  \begin{align*}
    \tilde{y}_{2:K} \in \argmax_{\substack{y_{2:K} \in \mathcal{Y}^{K-1}:\\ \abs{\myset{y_1 \ldots y_K}} = K}}\; \sum_{k=2}^K s_\theta(x,y_k)
  \end{align*}
  breaking ties arbitrarily, and define the point-mass distribution over $\mathcal{Y}^{K-1}$:
  \begin{align*}
    \tilde{h}(y_{2:K}|x,y_1) = \doubbr{ y_k = \tilde{y}_k\; \forall k = 2 \ldots K}
  \end{align*}
  Then $\tilde{h} \in \argmax_{h \in \mathcal{H}} J_{\mbox{\tiny ADV}}(\theta, h)$.
\label{thm:adv}
\end{thm}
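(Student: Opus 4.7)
The strategy is to decompose the outer expectation over $(x,y_1) \sim \pop$ and argue that maximization can be carried out pointwise in $(x,y_1)$, then show that a point-mass distribution on the maximizer of the integrand is optimal. Finally, I identify that maximizer by reducing the log-softmax objective to the simple sum of scores that appears in the theorem statement.

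\textbf{Step 1: Pointwise reduction.} Expanding the softmax, write
\[
  -\log \pi_\theta(1|x,y_{1:K}) = -s_\theta(x,y_1) + \log \sum_{k=1}^K \myexp{s_\theta(x,y_k)}.
\]
Since $h$ affects only the inner expectation over $y_{2:K}$, I can write $J_{\mbox{\tiny ADV}}(\theta,h) = \expected{(x,y_1)\sim \pop}{F_\theta(x,y_1,h)}$ where $F_\theta(x,y_1,h)$ is the expectation of the display above under $y_{2:K} \sim h(\cdot|x,y_1)$. Because $\mathcal{H}$ consists of \emph{conditional} distributions, one is free to choose $h(\cdot|x,y_1)$ separately for each $(x,y_1)$; thus it suffices to show that, for each fixed $(x,y_1)$, the distribution $\tilde{h}(\cdot|x,y_1)$ maximizes $F_\theta(x,y_1,\cdot)$.

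\textbf{Step 2: Point-mass suffices.} Fix $(x,y_1)$ and let $f(y_{2:K}) = -\log \pi_\theta(1|x,y_{1:K})$, viewed as a function on the feasible set $\mathcal{S} = \{y_{2:K} \in \mathcal{Y}^{K-1}:\ |\{y_1,\ldots,y_K\}| = K\}$. For any $h \in \mathcal{H}$ supported on $\mathcal{S}$, we have $F_\theta(x,y_1,h) = \sum_{y_{2:K} \in \mathcal{S}} h(y_{2:K}|x,y_1) f(y_{2:K}) \leq \max_{y_{2:K} \in \mathcal{S}} f(y_{2:K})$, with equality achieved by putting all mass on any maximizer. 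Hence any point-mass distribution concentrated on an argmax of $f$ over $\mathcal{S}$ is optimal.

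\textbf{Step 3: Identifying the argmax.} Since $-s_\theta(x,y_1)$ is constant in $y_{2:K}$ and $\log(\cdot)$ is strictly increasing, maximizing $f$ over $\mathcal{S}$ is equivalent to maximizing $\sum_{k=2}^K \myexp{s_\theta(x,y_k)}$ over $\mathcal{S}$. This is a separable sum over the $K-1$ distinct labels $y_2,\ldots,y_K$ drawn from $\mathcal{Y}\setminus\{y_1\}$, so the optimum is attained by any ordered tuple consisting of $K-1$ labels from $\mathcal{Y}\setminus\{y_1\}$ with the largest individual values of $\myexp{s_\theta(x,\cdot)}$. Since $u \mapsto e^u$ is strictly monotone, the same tuples maximize $\sum_{k=2}^K s_\theta(x,y_k)$. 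Therefore any $\tilde{y}_{2:K}$ chosen as in the theorem statement lies in $\argmax_{y_{2:K}\in\mathcal{S}} f(y_{2:K})$, and the associated point-mass $\tilde{h}$ belongs to $\argmax_{h\in\mathcal{H}} J_{\mbox{\tiny ADV}}(\theta,h)$.

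\textbf{Main obstacle.} There is no deep obstacle; the only subtlety is Step~3, namely passing from the actual integrand (a log-sum-exp with an extra fixed term $\myexp{s_\theta(x,y_1)}$ inside the sum) to the clean objective $\sum_{k=2}^K s_\theta(x,y_k)$ stated in the theorem. The passage relies on two observations that are easy to overlook: the invariance of $f$ to permutations of $y_2,\ldots,y_K$ (so that only the underlying multiset matters), and the order-preserving nature of $\exp$ (so that maximizing a sum of exponentiated scores over a feasible set of the same cardinality picks the same labels as maximizing the sum of raw scores).
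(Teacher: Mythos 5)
Your proof is correct and follows essentially the same route as the paper's: drop the constant $-s_\theta(x,y_1)$ term, use monotonicity of $\log$ and $\exp$ to reduce the inner maximization to picking the $K-1$ distinct labels with highest scores, and observe that the point mass $\tilde{h}$ attains the pointwise maximum of the integrand. The paper states this in three terse sentences; your Steps~1--3 merely unpack the same argument (pointwise reduction, ``expectation $\leq$ max'' for point masses, and separability plus order preservation of $\exp$) in greater detail.
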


\begin{proof}
  $\max_{h \in \mathcal{H}} J_{\mbox{\tiny ADV}}(\theta, h)$ is equivalent to
  \begin{align*}
    \max_{h \in \mathcal{H}} \expected{\substack{(x,y_1)\sim \pop \\ y_{2:K} \sim h(\cdot|x,y_1)}}{\log \sum_{k=1}^K \myexp{ s_\theta(x,y_k)}}
  \end{align*}
  The expression inside the expectation is maximized by $\tilde{y}_{2:K}$ by the monotonicity of $\log$ and $\exp$,
  subject to the constraint that $\abs{\myset{y_1 \ldots y_K}} = K$.
  $\tilde{h} \in \mathcal{H}$ achieves this maximum.
\end{proof}

\section{Score Function}
\label{sec:score}

Along with the choice of negatives, the choice of the score function $s_\theta:\mathcal{X} \times \mathcal{Y} \ra \R$ is a critical component of NCE in practice.
There is a clear trade-off between performance and efficiency in modeling the cross interaction between the input-label pair $(x,y)$. 
This trade-off spurred many recent works to propose various architectures in search of a sweet spot \citep{poly,luan2020sparse},
but they are developed in isolation of one another and difficult to compare.
In this section, we give a general algebraic form of the score function that subsumes many of the existing works as special cases.

\subsection{General Form}

We focus on the standard setting in NLP in which $x \in \mathcal{V}^T$ and $y \in \mathcal{V}^{T'}$ are sequences of tokens in a vocabulary $\mathcal{V}$.
Let $E(x) \in \R^{H \by T}$ and $F(y) \in \R^{H \by T'}$ denote their encodings, typically obtained from the final layers of separate pretrained transformers like \bert \citep{devlin-etal-2019-bert}.
We follow the convention popularized by \bert and assume the first token is a special symbol (i.e., [CLS]), so that $E_1(x)$ and $F_1(y)$ represent single-vector summaries of $x$ and $y$.
We have the following design choices:
\begin{itemizesquish}
\item \textbf{Direction}: If $x \ra y$, define the query $Q = E(x)$ and key $K = F(y)$. If $y \ra x$, define the query $Q = F(y)$ and key $K = E(x)$.
\item \textbf{Reduction}: Given integers $m, m'$, reduce the number of columns in $Q$ and $K$ to obtain $Q_m \in \R^{H \by m}$ and $K_{m'} \in \R^{H \by m'}$.
  We can simply select leftmost columns, or introduce an additional layer to perform the reduction.
\item \textbf{Attention}: Choose a column-wise attention $\mathrm{Attn}: A \mapsto \wb{A}$ either $\mathrm{Soft}$ or $\mathrm{Hard}$.
  If $\mathrm{Soft}$, $\wb{A}_t = \mathrm{softmax}(A_t)$ where the subscript denotes the column index.
  If $\mathrm{Hard}$, $\wb{A}_t$ is a vector of zeros with exactly one $1$ at index $\argmax_i [A_t]_i$.
\end{itemizesquish}
Given the design choices, we define the score of $(x,y)$ as
\begin{align}
  s_\theta(x,y) = 1_m^\top Q_m^\top K_{m'} \mathrm{Attn}\paren{K_{m'}^\top Q_m} \label{eq:score}
\end{align}
where $1_m$ is a vector of $m$ 1s that aggregates query scores.
Note that the query embeddings $Q_m$ double as the value embeddings.
The parameter vector $\theta \in \R^d$ denotes the parameters of the encoders $E, F$ and the optional reduction layer.

\subsection{Examples}
\label{subsec:examples}

\paragraph{Dual encoder.} Choose either direction $x \ra y$ or $y \ra x$. Select the leftmost $m = m' = 1$ vectors in $Q$ and $K$ as the query and key.
The choice of attention has no effect.
This recovers the standard dual encoder used in many retrieval problems \citep{gupta-etal-2017-entity,lee2019latent,logeswaran2019zero,wu2020scalable,karpukhin2020dense,guu2020realm}: $s_\theta(x,y) = E_1(x)^\top F_1(y)$.

\paragraph{Poly-encoder.} Choose the direction $y \ra x$.
Select the leftmost $m=1$ vector in $F(y)$ as the query.
Choose an integer $m'$ and compute $K_{m'} = E(x) \mathrm{Soft}(E(x)^\top O)$ where
$O \in \R^{H \by m'}$ is a learnable parameter  (``code'' embeddings). Choose soft attention.
This recovers the poly-encoder \citep{poly}:
$s_\theta(x,y) = F_1(y)^\top C_{m'}(x, y)$
where $C_{m'}(x, y) = K_{m'} \mathrm{Soft}\paren{K_{m'}^\top F_1(y)}$.
Similar architectures without length reduction have been used in previous works,
for instance the neural attention model of \citet{ganea-hofmann-2017-deep}.

\paragraph{Sum-of-max.} Choose the direction $x \ra y$.
Select all $m=T$ and $m'=T'$ vectors in $E(x)$ and $F(y)$ as the query and key.
Choose $\mathrm{Attn} = \mathrm{Hard}$.
This recovers the sum-of-max encoder (aka., ColBERT) \citep{colbert}:
$s_\theta(x,y) = \sum_{t=1}^T \max_{t'=1}^{T'} E_t(x)^\top F_{t'}(y)$.

\paragraph{Multi-vector.} Choose the direction $x \ra y$.
Select the leftmost $m=1$ and $m'=8$ vectors in $E(x)$ and $F(y)$ as the query and key.
Choose $\mathrm{Attn} = \mathrm{Hard}$.
This recovers the multi-vector encoder \citep{luan2020sparse}:
$s_\theta(x,y) = \max_{t'=1}^{m'} E_1(x)^\top F_{t'}(y)$.
It reduces computation to fast dot products over cached embeddings, but is less expressive than the sum-of-max.

\vspace{3mm}
The abstraction \eqref{eq:score} is useful because it generates a spectrum of architectures as well as unifying existing ones.
For instance, it is natural to ask if we can further improve the poly-encoder by using $m > 1$ query vectors.
We explore these questions in experiments.

\section{Related Work}
\label{sec:related}

We discuss related work to better contextualize our contributions.
There is a body of work on developing unbiased estimators of the population distribution by modifying NCE.
The modifications include learning the normalization term as a model parameter \citep{gutmann2012noise,mnih2012fast}
and using a bias-corrected score function \citep{ma-collins-2018-noise}.
However, they assume unconditional negative distributions and do not explain the benefit of hard negatives in NCE
\citep{gillick-etal-2019-learning,wu2020scalable,karpukhin2020dense,wikipretrainthing}.
In contrast, we directly consider the hard-negative NCE loss used in practice \eqref{eq:nce-hard}, and
justify it as a biased estimator of the gradient of the cross-entropy loss.

Our work is closely related to prior works on estimating the gradient of the cross-entropy loss, again by modifying NCE.
They assume the following loss \citep{bengio2008adaptive}, which we will denote by $J_{\mbox{\tiny PRIOR}}(\theta)$:
\begin{align}
  \expected{\substack{(x,y_1) \sim \pop \\ y_{2:K} \sim \nu(\cdot|x,y_1)^K}}{ -\log \frac{\myexp{\bar{s}_\theta(x,y_1,y_1)}}{\sum_{k=1}^K \myexp{\bar{s}_\theta(x,y_1, y_k)}}} \label{eq:hard-prior}
\end{align}
Here, $\nu(\cdot|x,y_1)$ is a conditional distribution over $\mathcal{Y}\backslash \myset{y_1}$, and
$\bar{s}_\theta(x,y',y)$ is equal to $s_\theta(x,y)$ if $y = y'$ and
$s_\theta(x,y) - \log((K-1)\nu(y|x,y_1))$ otherwise.
It can be shown that $\nabla J_{\mbox{\tiny PRIOR}}(\theta) = \nabla J_{\mbox{\tiny CE}}(\theta)$ iff $\nu(y|x,y_1) \propto \exp(s_\theta(x,y))$
for all $y \in \mathcal{Y}\backslash \myset{y_1}$ \citep{blanc2018adaptive}.
However, \eqref{eq:hard-prior} requires adjusting the score function and iid negative examples, thus less aligned with practice than \eqref{eq:nce-hard}.
The bias analysis of $\nabla J_{\mbox{\tiny PRIOR}}(\theta)$ for general $\nu(\cdot|x,y_1)$ is also significantly more complicated than Theorem~\ref{thm:bias} \citep{rawat2019sampled}.

There is a great deal of recent work on unsupervised contrastive learning of image embeddings in computer vision \citep[\textit{inter alia}]{oord2018representation,hjelm2018learning,chen2020simple}.
Here, $s_\theta(x,y) = E_\theta(x)^\top F_\theta(y)$ is a similarity score between images, and $E_\theta$ or $F_\theta$ is used to produce useful image representations for downstream tasks.
The model is again learned by \eqref{eq:nce} where $(x, y_1)$ are two random corruptions  of the same image and $y_{2:K}$ are different images.
\citet{robinson2021contrastive} propose a hard negative distribution in this setting and analyze the behavior of learned embeddings under that distribution.
In contrast, our setting is large-scale supervised classification, such as entity linking, and our analysis is concerned with NCE with general hard negative distributions.

In a recent work, \citet{xiong2021approximate} consider contrastive learning for text retrieval with hard negatives
obtained globally from the whole data with asynchronous updates, as we do in our experiments.
They use the framework of importance sampling to argue that hard negatives yield gradients with larger norm, thus smaller variance and faster convergence.
However, their argument does not imply our theorems. They also assume a pairwise loss, excluding non-pairwise losses such as \eqref{eq:nce}.

\section{Experiments}
\label{sec:experiments}

We now study empirical aspects of the hard-negative NCE (Section~\ref{sec:hard}) and the spectrum of score functions (Section~\ref{sec:score}).
Our main testbed is Zeshel \citep{logeswaran2019zero}, a challenging dataset for zero-shot entity linking.
We also present complementary experiments on AIDA CoNLL-YAGO \citep{hoffart2011robust}.\footnote{Our code is available at: \url{https://github.com/WenzhengZhang/hard-nce-el}.}

\subsection{Task}

Zeshel contains 16 domains (fictional worlds like \textit{Star Wars}) partitioned to 8 training and 4 validation and test domains.
Each domain has tens of thousands of entities along with their textual descriptions, which contain references to other entities in the domain and double as labeled mentions.
The input $x$ is a contextual mention and the label $y$ is the description of the referenced entity.
A score function $s_\theta(x,y)$ is learned in the training domains and applied to a new domain for classification and retrieval.
Thus the model must read descriptions of unseen entities and still make correct predictions.

We follow prior works and report micro-averaged top-64 recall and macro-averaged accuracy for evaluation.
The original Zeshel paper \citep{logeswaran2019zero} distinguishes normalized vs unnormalized accuracy.
Normalized accuracy assumes the presence of an external retriever and considers a mention only if its gold entity is included in top-64 candidates from the retriever.
In this case, the problem is reduced to reranking and a computationally expensive joint encoder can be used.
Unnormalized accuracy considers all mentions. 
Our goal is to improve unnormalized accuracy.

\citet{logeswaran2019zero} use BM25 for retrieval, which upper bounds unnormalized accuracy by its poor recall (first row of Table~\ref{tab:recall}).
\citet{wu2020scalable} propose a two-stage approach in which a dual encoder is trained by hard-negative NCE and held fixed,
then a \bert-based joint encoder is trained to rerank the candidates retrieved by the dual encoder.
This approach gives considerable improvement in unnormalized accuracy, primarily due to the better recall of a trained dual encoder over BM25 (second row of Table~\ref{tab:recall}).
We show that we can further push the recall by optimizing the choice of hard negatives and architectures.

\subsection{Architectures}

We represent $x$ and $y$ as length-$128$ wordpiece sequences where the leftmost token is the special symbol [CLS];
we mark the boundaries of a mention span in $x$ with special symbols.
We use two independent \bert-bases to calculate mention embeddings $E(x) \in \R^{768 \by 128}$ and entity embeddings $F(y) \in \R^{768 \by 128}$,
where the columns $E_t(x), F_t(y)$ are contextual embeddings of the $t$-th tokens.

\paragraph{Retriever.}
The retriever defines $s_\theta(x, y)$, the score between a mention $x$ and an entity $y$, by one of the architectures described in Section~\ref{subsec:examples}:
\begin{align*}
  E_1(x)^\top F_1(y) &&\mbox{ (\dual)} \\
  F_1(y)^\top C_m(x, y) &&\mbox{ (\poly-$m$)} \\
  \mbox{$\max_{t=1}^m$} E_1(x)^\top F_t(y) &&\mbox{ (\me-$m$)} \\
  \mbox{$\sum_{t=1}^{128} \max_{t'=1}^{128}$} E_t(x)^\top F_{t'}(y) &&\mbox{ (\som)}
\end{align*}
denoting the dual encoder, the poly-encoder \citep{poly},
the multi-vector encoder \citep{luan2020sparse}, and the sum-of-max encoder \citep{colbert}.
These architectures are sufficiently efficient to calculate $s_\theta(x,y)$ for all entities $y$ in training domains for each mention $x$.
This efficiency is necessary for sampling hard negatives during training and retrieving candidates at test time.

\paragraph{Reranker.}
The reranker defines $s_\theta(x,y) = w^\top E_1(x,y) + b$
where $E(x,y) \in \R^{H \by 256}$ is \bert (either base $H = 768$ or large $H=1024$) embeddings of the concatenation of $x$ and $y$ separated by the special symbol [SEP],
and $w, b$ are parameters of a linear layer.
We denote this encoder by \joint.

\subsection{Optimization}
\label{sec:opt}

\paragraph{Training a retriever.}
A retriever is trained by minimizing an empirical estimate of the hard-negative NCE loss \eqref{eq:nce-hard},
\begin{align}
  \wh{J}_{\mbox{\tiny HARD}}(\theta) = - \frac{1}{N} \sum_{i=1}^N  \log \frac{\myexp{s_\theta(x_i,y_{i,1})}}{ \sum_{k'=1}^K \myexp{s_\theta(x_i,y_{i,k'})} }
  \label{eq:emp-loss}
\end{align}
where $(x_1, y_{1,1}) \ldots (x_N, y_{N,1})$ denote $N$ mention-entity pairs in training data, and
$y_{i,2} \ldots y_{i,K} \sim h(\cdot|x_i, y_{i,1})$ are $K-1$ negative entities for the $i$-th mention.
We vary the choice of negatives as follows.
\begin{itemizesquish}
\item Random: The negatives are sampled uniformly at random from all entities in training data.
\item Hard: The negatives are sampled from \eqref{eq:mine} each epoch.
  That is, in the beginning of each training pass, for each $i$ we sample entities $y_{i,2} \ldots y_{i,K}$
  from $\mathcal{Y} \backslash \myset{y_{i,1}}$  without replacement with probabilities proportional to $\myexp{s_\theta(x_i,y_{i,k})}$.
  This is slightly different from, and simpler than, the original hard negative mining strategy of \citet{gillick-etal-2019-learning}
  which pretrains the model using random negatives then greedily adds negative entities that score higher than the gold.
\item Mixed-$p$: $p$ percent of the negatives are hard, the rest are random.
  Previous works have shown that such a combination of random and hard negatives can be effective.
  We find the performance is not sensitive to the value of $p$ (Appendix~\ref{app:p-values}).
\end{itemizesquish}
We experimented with in-batch sampling as done in previous works (e.g., \citet{gillick-etal-2019-learning}),
but found sampling from all training data to be as effective and more straightforward (e.g., the number of random negatives is explicitly unrelated to the batch size).
We use $K=64$ in all experiments.

\paragraph{Training a reranker.}
We use \joint only for reranking by minimizing \eqref{eq:emp-loss} with top-63 negatives given by a fixed retriever,
where we vary the choice of retriever.
We also investigate other architectures for reranking such as the poly-encoder and the sum-of-max encoder,
but we find the full cross attention of \joint to be indispensable.
Details of reranking experiments can be found in Appendix~\ref{app:reranking}.

\begin{figure}[t!]
\vskip 0.2in
\begin{center}
  \centerline{
    \includegraphics[width=\columnwidth/2]{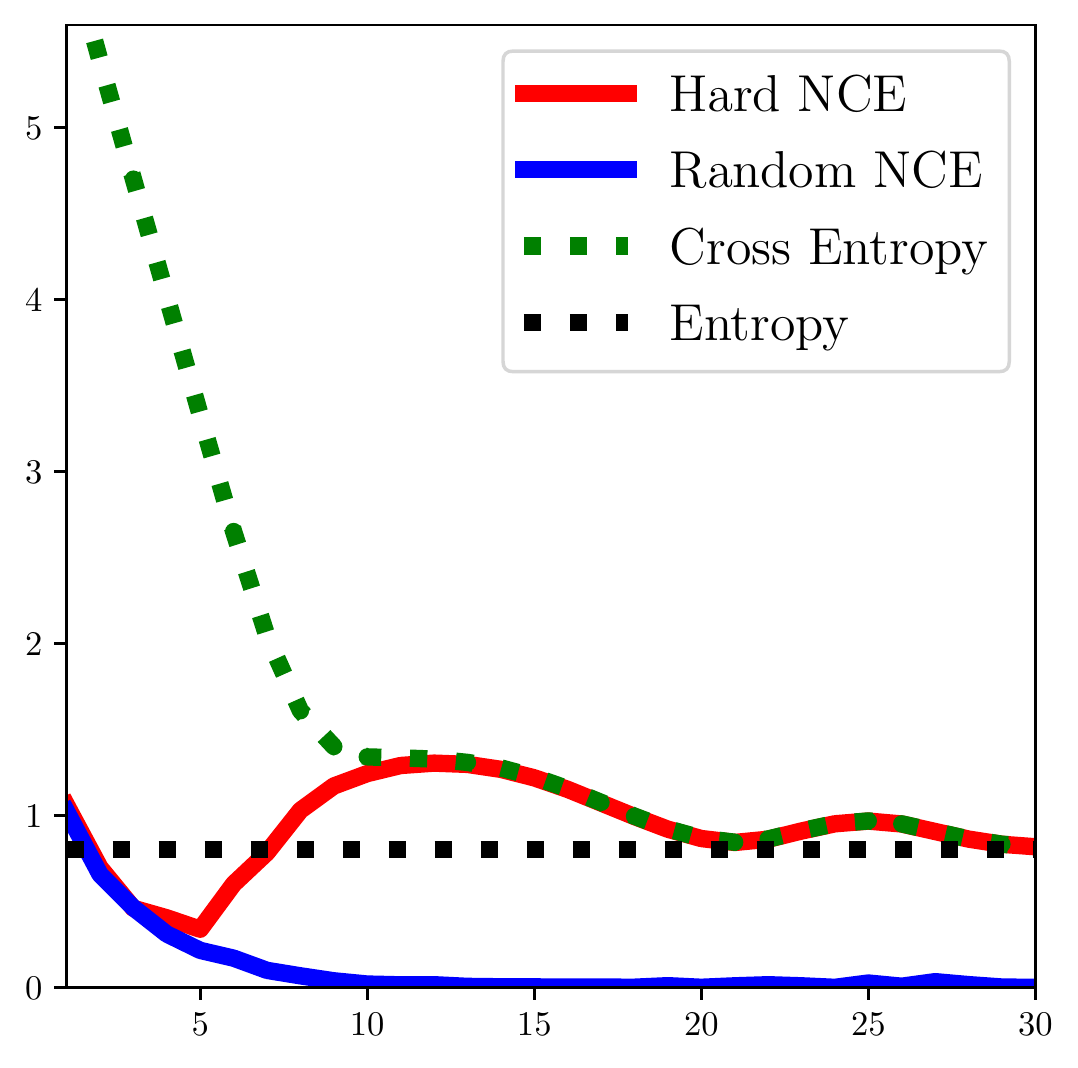}
    \includegraphics[width=\columnwidth/2]{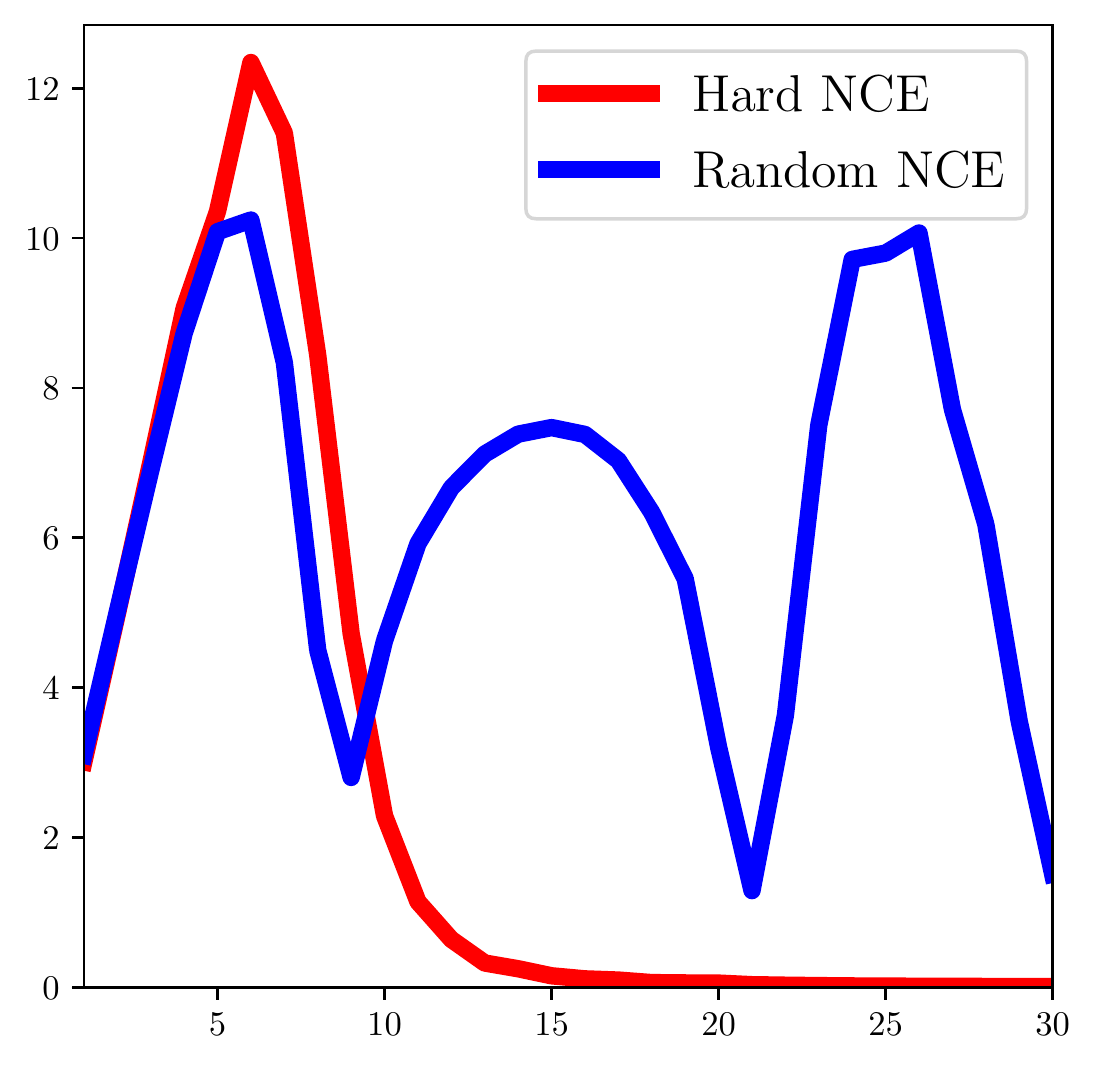}
  }
  \caption{Synthetic experiments.
  We use a feedforward network to estimate the population distribution by minimizing sampled cross entropy in each step ($x$-axis).
  We show the NCE loss (left) and the norm of the gradient bias (right) using hard vs random negatives.}
\label{fig:synthetic}
\end{center}
\end{figure}

\paragraph{Other details.}
All models are trained up to 4 epochs using Adam.
We tune the learning rate over $\myset{5\mathrm{e}{-5},2\mathrm{e}{-5},1\mathrm{e}{-5}}$ on validation data.
We use the training batch size of $4$ mentions for all models except for \joint, for which we use $2$.
Training time is roughly half a day on a single NVIDIA A100 GPU for all models, except the \som retriever which takes 1-2 days.

\begin{table}[t!]
  {\small
  \centering
\begin{tabular}{lccc}
\hline
\textbf{Model} & \textbf{Negatives} & \textbf{Val} & \textbf{Test}\\
\hline
\bm & -- &  76.22 & 69.13 \\
\citet{wu2020scalable} & Mixed (10 hard) &  91.44 & 82.06 \\
\hline
\hline
\dual   & Random & 91.08 & 81.80 \\
& Hard    & 91.99 & 84.87 \\
& Mixed-50 & 91.75 & 84.16 \\
\dual-{\scriptsize \eqref{eq:hard-prior}}   & Hard & 91.57 & 83.08 \\
\hline
\poly-{\scriptsize 16}  & Random & 91.05 & 81.73 \\
& Hard    & 92.08 & 84.07 \\
& Mixed-50 & 92.18 & 84.34 \\
\hline
\me-{\scriptsize 8} & Random & 91.13 &  82.44 \\
& Hard & 92.35 & 84.94 \\
& Mixed-50 & 92.76 & 84.11 \\
\hline
\som & Random & 92.51 &  87.62 \\
& Hard & 94.49 & 88.68 \\
& Mixed-50 & \textbf{94.66} & \textbf{89.62} \\
\hline
\end{tabular}
\caption{Top-64 recalls over different choices of architecture and negative examples for a retriever trained by NCE.
  \citet{wu2020scalable} train a dual encoder by NCE with 10 hard negatives.
  \dual-{\small \eqref{eq:hard-prior}} is \dual trained with the score-adjusted loss \eqref{eq:hard-prior}.
}
\label{tab:recall}
}
\end{table}

\begin{table*}
  {
  \centering
\begin{tabular}{lccccc}
\hline
\textbf{Model}                 &            \textbf{Retriever} &   \textbf{Negatives}  &   \textbf{Joint Reranker}  &     \multicolumn{2}{c}{\textbf{Unnormalized}}  \\
                               &                               &                       &                      &     \textbf{Val} & \textbf{Test}  \\
\hline
\citet{logeswaran2019zero}     &       BM25                    &    --                 & base                 &  --          & 55.08 \\
\citet{logeswaran2019zero}+DAP &       BM25                    &    --                 & base                 &  --          & 55.88 \\
\citet{wu2020scalable}         &       \dual (base)            &    Mixed (10 hard)    & base                 &  --          & 61.34 \\
\citet{wu2020scalable}         &       \dual (base)            &    Mixed (10 hard)    & large                &  --          & 63.03 \\
\hline
\hline
Ours                           &       \dual (base)            &    Hard               & base                 & 69.14        & 65.42 \\
                               &       \dual (base)            &    Hard               & large                & 68.31        & 65.32 \\
                               &       \som  (base)            &    Hard               & base                 & 69.19        & 66.67 \\
                               &       \som  (base)            &    Hard               & large                & 70.08        & 65.95 \\
                               &       \som  (base)            &    Mixed-50           & base                 & 69.22        & 65.37 \\
                               &       \som  (base)            &    Mixed-50           & large                & \textbf{70.28}        & \textbf{67.14} \\
\hline
\end{tabular}
\caption{Unnormalized accuracies with two-stage training. DAP refers to domain adaptive pre-training on source and target domains.
}
\label{tab:main}
}
\end{table*}

\begin{table*}[!ht]
{\scriptsize
\centering
\begin{tabular}{l|l}
\hline
Mention & $\ldots$ his temporary usurpation of the Imperial throne by invading and seized control of the Battlespire, the purpose of this being to cripple \\
& the capacity of the Imperial College of Battlemages, which presented a threat to Tharn's power as Emperor. Mehrunes Dagon was \\
& responsible for the \textbf{destruction} of Mournhold at the end of the First Era, and apparently also $\ldots$ \\
\hline
Random
&   1. \textbf{Mehrunes Dagon} is one of the seventeen Daedric Princes of Oblivion and the primary antagonist of $\ldots$ \\
&   2. \textbf{Daedric Forces of Destruction} were Mehrunes Dagon's personal army, hailing from his realm of Oblivion, the Deadlands. $\ldots$\\
&   3. \textbf{Weir Gate} is a device used to travel to Battlespire from Tamriel. During the Invasion of the Battlespire, Mehrunes Dagon's forces $\ldots$ \\
&   4. \textbf{Jagar Tharn} was an Imperial Battlemage and personal adviser to Emperor Uriel Septim VII. Tharn used the Staff of Chaos $\ldots$ \\
&   5. \textbf{House Sotha} was one of the minor Houses of Vvardenfell until its destruction by Mehrunes Dagon in the times of Indoril Nerevar. $\ldots$ \\
&   6. \textbf{Imperial Battlespire} was an academy for training of the Battlemages of the Imperial Legion. The Battlespire was moored in $\ldots$ \\
\hline
Hard
& 1. \textbf{Fall of Ald'ruhn} was a battle during the Oblivion Crisis. It is one of the winning battles invading in the name of Mehrunes Dagon $\ldots$ \\
& 2. \textbf{Daedric Forces of Destruction} were Mehrunes Dagon's personal army, hailing from his realm of Oblivion, the Deadlands. $\ldots$\\
& 3. \textbf{House Sotha} was one of the minor Houses of Vvardenfell until its destruction by Mehrunes Dagon in the times of Indoril Nerevar. $\ldots$ \\
& \checkmark 4. \textbf{Sack of Mournhold} was an event that occurred during the First Era. It was caused by the Dunmer witch Turala Skeffington $\ldots$ \\
& 5. \textbf{Mehrunes Dagon of the House of Troubles} is a Tribunal Temple quest, available to the Nerevarine in $\ldots$ \\
& 6. \textbf{Oblivion Crisis}, also known as the Great Anguish to the Altmer or the Time of Gates by Mankar Camoran, was a period of major turmoil $\ldots$ \\
\hline
\end{tabular}
\caption{A retrieval example with hard negative training on Zeshel. We use a \som retriever trained with random vs hard negatives (92.51 vs 94.66 in top-64 validation recall).
We show a validation mention (\textbf{destruction}) whose gold entity is retrieved by the hard-negative model but not by the random-negative model.
Top entities are shown for each model (title boldfaced); the correct entity is \textbf{Sack of Mournhold} (checkmarked).
}
\label{tab:example}
}
\end{table*}

\subsection{Bias}
\label{sec:bias}

We conduct experiments on synthetic data to empirically validate our bias analysis in Section~\ref{sec:grad}.
Analogous experiments on Zeshel with similar findings can be found in Appendix~\ref{app:bias-zeshel}.

We construct a population distribution over 1000 labels with small entropy to represent the peaky conditional label distribution $\pop(y|x)$.
We use a feedforward network with one ReLU layer to estimate this distribution by minimizing the empirical cross-entropy loss based on 128 iid samples per update.
At each update, we compute cross-entropy \eqref{eq:ce} exactly, and estimate NCE \eqref{eq:nce-hard} with 4 negative samples by Monte Carlo (10 simulations).

Figure~\ref{fig:synthetic} plots the value of the loss function (left) and the norm of the gradient bias (right) across updates.
We first observe that hard NCE yields an accurate estimate of cross entropy even with 4 samples.
In contrast, random NCE quickly converges to zero, reflecting the fact that the model can trivially discriminate between the gold and random labels.
We next observe that the bias of the gradient of hard NCE vanishes as the model distribution converges to the population distribution,
which supports our analysis that the bias becomes smaller as the model improves.
The bias remains nonzero for random NCE.

\subsection{Retrieval}

Table~\ref{tab:recall} shows the top-64 recall (i.e., the percentage of mentions whose gold entity is included in
the 64 entities with highest scores under a retriever trained by \eqref{eq:nce-hard})
as we vary architectures and negative examples.
We observe that hard and mixed negative examples always yield sizable improvements over random negatives, for all architectures.
Our dual encoder substantially outperforms the previous dual encoder recall by \citet{wu2020scalable},
likely due to better optimization such as global vs in-batch random negatives and the proportion of hard negatives.
We also train a dual encoder with the bias-corrected loss \eqref{eq:hard-prior} and find that this does not improve recall.
The poly-encoder and the multi-vector models are comparable to but do not improve over the dual encoder.
However, the sum-of-max encoder delivers a decisive improvement, especially with hard negatives, pushing the test recall to above 89\%.
Based on this finding, we use \dual and \som for retrieval in later experiments.

\subsection{Results}

We show our main results in Table~\ref{tab:main}.
Following \citet{wu2020scalable}, we do two-stage training in which we train a \dual or \som retriever with hard-negative NCE and
train a \joint reranker to rerank its top-64 candidates.
All our models outperform the previous best accuracy of 63.03\% by \citet{wu2020scalable}.
In fact, our dual encoder retriever using a \bert-base reranker outperforms the dual encoder retriever using a \bert-large reranker (65.42\% vs 63.03\%).
We obtain a clear improvement by switching the retriever from dual encoder to sum-of-max due to its high recall (Table~\ref{tab:recall}).
Using a sum-of-max retriever trained with mixed negatives and a \bert-large reranker gives the best result 67.14\%.

\subsection{Qualitative Analysis}

To better understand practical implications of hard negative mining,
we compare a \som retriever trained on Zeshel with random vs hard negatives (92.51 vs 94.66 in top-64 validation recall).
The mention categories most frequently improved are Low Overlap (174 mentions) and Multiple Categories (76 mentions) (see \citet{logeswaran2019zero} for the definition of these categories),
indicating that hard negative mining makes the model less reliant on string matching.
A typical example of improvement is shown in Table~\ref{tab:example}.
The random-negative model retrieves person, device, or institution entities because they have more string overlap (e.g. ``Mehrunes Dagon'', ``Battlespire'', and ``Tharn'').
In contrast, the hard-negative model appears to better understand that the mention is referring to a chaotic event like the Fall of Ald'ruhn, Sack of Mournhold, and Oblivion Crisis
and rely less on string matching.
We hypothesize that this happens because string matching is sufficient to make a correct prediction during training if negative examples are random,
but insufficient when they are hard.

To examine the effect of encoder architecture,
we also compare a \dual vs \som retriever both trained with mixed negatives (91.75 vs 94.66 in top-64 validation recall).
The mention categories most frequently improved are again Low Overlap (335 mentions) and Multiple Categories (41 mentions).
This indicates that cross attention likewise helps the model less dependent on simple string matching,
presumably by allowing for a more expressive class of score functions.

\begin{table}
  {
  \centering
\begin{tabular}{l|ccc}
\hline
Model & Accuracy \\
\hline
\blink without finetuning & 80.27 \\
\blink with finetuning & 81.54 \\
\hline
\hline
\dual with $p=0$ & 82.40 \\
\dual with $p=50$  & 88.01 \\
\me-$2$ with $p=50$       & 88.39 \\
\me-$3$ with $p=50$       & 87.94 \\
\hline
\end{tabular}
\caption{Test accuracies on AIDA CoNLL-YAGO.
\blink refers to the two-stage model of \citet{wu2020scalable} pretrained on Wikipedia.
All our models are initialized from the \blink dual encoder and finetuned using
all 5.9 million Wikipedia entities as candidates.
}
\label{tab:aida}
}
\end{table}

\subsection{Results on AIDA}

We complement our results on Zeshel with additional experiments on AIDA.
We use \blink, a Wikipedia-pretrained two-stage model (a dual encoder retriever pipelined with a joint reranker, both based on \bert) made available by \citet{wu2020scalable}.\footnote{\url{https://github.com/facebookresearch/BLINK}}
We extract the dual encoder module from \blink and finetune it on AIDA using the training portion.
During finetuning, we use all 5.9 million Wikipedia entities as candidates to be consistent with prior work.
Because of the large scale of the knowledge base we do not consider \som and focus on the \me-$m$ retriever (\dual is a special case with $m=1$).
At test time, all models consider all Wikipedia entities as candidates.
For both AIDA and the Wikipedia dump, we use the version prepared by the KILT benchmark \citep{petroni2020kilt}.

Table~\ref{tab:aida} shows the results.
Since \citet{wu2020scalable} do not report AIDA results, we take the performance of \blink without and with finetuning from their GitHub repository
and the KILT leaderboard.\footnote{\url{https://ai.facebook.com/tools/kilt/} (as of April 8, 2021)}
We obtain substantially higher accuracy by mixed-negative training even without reranking.\footnote{We find that reranking does not improve accuracy on this task,
likely because the task does not require as much reading comprehension as Zeshel.}
There is no significant improvement from using $m > 1$ in the multi-vector encoder on this task.

\section{Conclusions}
\label{sec:conclusions}

Hard negatives can often improve NCE in practice, substantially so for entity linking \citep{gillick-etal-2019-learning},
but are used without justification.
We have formalized the role of hard negatives in quantifying the bias of the gradient of the contrastive loss with respect to the gradient of the full cross-entropy loss.
By jointly optimizing the choice of hard negatives and architectures, we have obtained new state-of-the-art results
on the challenging Zeshel dataset \citep{logeswaran2019zero}.

\section*{Acknowledgements}

This work was supported by the Google Faculty Research Awards Program.
We thank Ledell Wu for many clarifications on the BLINK paper.

\bibliography{custom}
\bibliographystyle{acl_natbib}

\appendix

\section{Percentage of Hard Negatives}
\label{app:p-values}

We show top-64 validation recalls with varying values of the hard negative percentage $p$ in training below:
\begin{center}
  \begin{tabular}{l|ccc}
    \hline
    Mixed-$p$ (\%) & \dual  & \me-{\small 8} & \som \\
    \hline
    0 (Random)  & 91.08 & 91.13 & 92.51 \\
    25   & 92.18 & 92.74 & 94.13 \\
    50   & 91.75 & 92.76 & 94.66 \\
    75   & 92.24 & 93.41 & 94.37 \\
    100 (Hard)  & 92.05 & 93.27 & 94.54  \\
    \hline
  \end{tabular}
\end{center}
The presence of hard negatives is clearly helpful, but the exact choice of $p > 0$ is not as important.
We choose $p=50$ because we find that the presence of some random negatives often gives slight yet consistent improvement.

\section{Reranking Experiments}
\label{app:reranking}

We show the normalized and unnormalized accuracy of a reranker as we change the architecture while holding the retriever fixed:
\begin{center}{\small
    \begin{tabular}{lcccc}
      \hline
      \textbf{Model}            & \multicolumn{2}{c}{\textbf{Normalized}} & \multicolumn{2}{c}{\textbf{Unnormalized}}  \\
      & \textbf{Val} & \textbf{Test}            & \textbf{Val} & \textbf{Test}  \\
      \hline
      \dual                     &  60.43       &  62.49                   & 54.87        &   54.73       \\
      \poly-{\scriptsize 16}    &  60.37       &  60.98                   & 54.82        &   53.37       \\
      \poly-{\scriptsize 64}    &  60.80       &  61.88                   & 55.20        &   54.15       \\
      \poly-{\scriptsize 128}   &  60.60       &  62.72                   & 55.03        &   54.92       \\
      \me-{\scriptsize 8}       &  61.56       &  62.65                   & 55.90        &   54.87       \\
      \me-{\scriptsize 64}      &  61.94       &  62.94                   & 56.23        &   55.15       \\
      \me-{\scriptsize 128}     &  61.67       &  62.95                   & 55.98        &   55.17       \\
      \som                      &  65.38       &  65.24                   & 59.35        &   57.04       \\
      \genpoly-{\scriptsize 128}&  65.89       &  64.98                   & 59.82        &   56.82       \\
      \joint                    &  \textbf{76.17}       &  \textbf{74.90}                   & \textbf{69.14}        &   \textbf{65.42}       \\
      \hline
      \citeauthor{logeswaran2019zero}  &  76.06       &  75.06          & --           &   55.08       \\
      \citeauthor{wu2020scalable}      &  78.24       &  76.58          & --           &   --          \\
      \joint (ours)                    &  78.82       &  77.09          & 58.77        &   56.56       \\
      \hline
    \end{tabular}
}\end{center}
\genpoly-$m$ denotes a generalized version of the poly-encoder in which we use $m$ leftmost entity embeddings rather than one:
$s_\theta(x,y) = 1^\top_m F_{1:m}(y)^\top C_m(x, y)$.
We use a trained dual encoder with 91.93\% and 83.48\% validation/test recalls as a fixed retriever.
The accuracy increases with the complexity of the reranker.
The dual encoder and the poly-encoder are comparable, but the multi-vector, the sum-of-max, and the generalized poly-encoder
achieve substantially higher accuracies.
Not surprisingly, the joint encoder achieves the best performance.
We additionally show reranking results using the BM25 candidates provided in the Zeshel dataset
for comparison with existing results. 
Our implementation of \joint with \bert-base obtains comparable accuracies.

\section{Bias Experiments on Zeshel}
\label{app:bias-zeshel}


We consider the dual encoder $s_\theta(x,y) = E_1(x)^\top F_1(y)$ where $E$ and $F$ are parameterized by \bert-bases.
We randomly sample $64$ mentions, yielding a total of $128$ entities: $64$ referenced by the mentions, and $64$ whose descriptions contain these mentions.
We consider these $128$ entities to constitute the entirety of the label space $\mathcal{Y}$.
On the 64 mentions, we estimate $J_{\mbox{\tiny CE}}(\theta)$ by normalizing over the 128 entities;
we estimate $J_{\mbox{\tiny HARD}}(\theta)$ by normalizing over $K=8$ candidates where 7 are drawn from a negative distribution: either random, hard, or mixed.
Instead of a single-sample estimate as in \eqref{eq:emp-loss}, we draw negative examples 500 times and average the result.
We estimate the bias $b(\theta) \in \R^d$ by taking a difference between these two estimates
and report the norm below:
\begin{center}
  \begin{tabular}{lrr}
    \hline
    \textbf{Negatives} & $\|b(\theta_{\mbox{\tiny CE}})\|$ & $\|b(\theta_{\mbox{\tiny RAND}})\|$ \\
    \hline
    Random   & 16.33 & 166.38 \\
    Hard     &  0.68 &   0.09 \\
    Mixed-50 &  1.20 &   0.90 \\
    \hline
  \end{tabular}
\end{center}
We consider two parameter locations.
$\theta_{\mbox{\tiny CE}}$ is obtained by minimizing the cross-entropy loss (92.19\% accuracy).
$\theta_{\mbox{\tiny RAND}}$ is obtained by NCE with random negatives (60\% accuracy).
The bias is drastically smaller when negative examples are drawn from the model instead of randomly.
Mixed negatives yield comparably small biases.
With random negatives, the bias is much larger at $\theta_{\mbox{\tiny RAND}}$ since $\nabla J_{\mbox{\tiny CE}}(\theta_{\mbox{\tiny RAND}})$ is large.
In contrast, hard and mixed negatives again yield small biases.

\end{document}